
\typeout{IJCAI-19 Instructions for Authors}


\documentclass{article}
\pdfpagewidth=8.5in
\pdfpageheight=11in
\usepackage{ijcai19}

\usepackage{times}
\usepackage{soul}
\usepackage{url}
\usepackage[hidelinks]{hyperref}

\usepackage[utf8]{inputenc}
\usepackage[small,skip=2pt]{caption}
\usepackage{graphicx}
\usepackage{amsmath}
\makeatletter
\def\maketag@@@#1{\hbox{\m@th\normalfont\normalsize#1}}
\makeatother
\usepackage{amsthm}
\usepackage{amsfonts}
\usepackage{dsfont}
\usepackage{booktabs}
\usepackage{algorithm}
\usepackage{algorithmic}
\urlstyle{same}
\usepackage{tabularx}
\linespread{0.98}
\setlength\intextsep{7pt}
\usepackage{microtype}
\usepackage{stfloats}




\title{PiNet: A Permutation Invariant Graph Neural Network for Graph Classification}


\author{
  Peter Meltzer$^{1,2}$
\and
Marcelo Daniel Gutierrez Mallea$^2$\And
Peter J. Bentley$^{1,2}$
\affiliations
$^1$Department of Computer Science, University College London, UK\\
$^2$Braintree Ltd., London, UK\\
\emails
\{p.meltzer, p.bentley\}@cs.ucl.ac.uk,
\{p.meltzer, p.bentley, m.gutierrez\}@braintree.com
}

\newtheorem{theorem}{Theorem}[section]

\newtheorem{lemma}[theorem]{Lemma}

\renewcommand{\vec}[1]{\mathbf{#1}}
\renewcommand{\o}{\cdot}
\renewcommand{\P}{\mathbf{P}}

\newcommand{\T}{\top}
\theoremstyle{definition}
\newtheorem{definition}{Definition}[section]

\newcommand{\Real}[1]{\mathbb{R}^{#1}}

\begin{document}

\maketitle

\begin{abstract}
We propose an end-to-end deep learning learning model for graph classification and representation learning that is invariant to permutation of the nodes of the input graphs. We address the challenge of learning a fixed size graph representation for graphs of varying dimensions through a differentiable node attention pooling mechanism. In addition to a theoretical proof of its invariance to permutation, we provide empirical evidence demonstrating the statistically significant gain in accuracy when faced with an isomorphic graph classification task given only a small number of training examples.
  
We analyse the effect of four different matrices to facilitate the local message passing mechanism by which graph convolutions are performed vs. a matrix parametrised by a learned parameter pair able to transition smoothly between the former. Finally, we show that our model achieves competitive classification performance with existing techniques on a set of molecule datasets.
\end{abstract}

\section{Introduction}

Graph classification, the problem of predicting a label to each graph in a given set, is of significant interest in the bio- and chemo-informatics domains, among others; with typical applications in predicting chemical properties \cite{Li2016}, drug effectiveness \cite{Bunke2009}, protein functions \cite{Shervashidze2009}, and classification of segmented images \cite{Scarselli2009}.

There are two major challenges faced by graph classifiers: First, a problem of ordering, i.e. the ability to recognise isomorphic graphs as equivalent when the order of their nodes/edges are permuted, and second, in how to handle instances of varying dimensions, i.e. graphs with different numbers of nodes/edges. In image classification, the ordering of pixels is given, and instances differing in size may be scaled; however, for graphs the ordering of nodes/edges is typically arbitrary, and finding the analogous transformation to scaling an image is evidently non-trivial.

Typical approaches to solving these challenges include kernel methods, in which implicit kernel spaces circumvent the need to map each instance to a fixed size, ordered representation for classification \cite{Zhang2018}, and deep learning architectures with some explicit feature extraction method whereby a fixed size representation is constructed for each graph and passed to a CNN (or similar) classification model \cite{Niepert2016}. While the deep learning approaches often outperform the kernel methods with respect to scalability in the number of graphs, they require suitable fixed size representation for each graph, and typically cannot guarantee that isomorphic graphs will be interpreted as the same.

In order to address these issues, we propose PiNet; an end-to-end deep learning graph convolution architecture with guaranteed invariance to permutation of nodes in the input graphs. To present our model, we first review relevant literature, then present the architecture with proof of its invariance to permutation. We conduct three experiment to evaluate PiNet's effectiveness: We verify the utility in its invariance to permutation with a graph isomorphism classification task, we then test its ability to learn appropriate message passing matrices, and we perform a benchmark test against existing classifiers on a set of standard molecule classification datasets. Finally, we draw our conclusions and suggest further work.

\section{Background}

\subsection{Graph Kernels}


A graph kernel $\phi$ is a positive semi-definite function that maps graphs belonging to a space $\mathcal{G}$ to an inner product in some Hilbert space $\mathcal{H}$, $\phi: \mathcal{G} \rightarrow \mathcal{H}$. Graph classification can be performed in the mapped space $\mathcal{H}$ with standard classification algorithms, or using the \emph{kernel trick} (with SVMs, for example) the mapped feature space may be exploited implicitly. In this sense, kernel methods are well suited to deal with the high and variable dimensions of graph data, where explicit computation of such a feature space may not be possible.

Despite the large number of graph kernels found in the literature, they typically fall into just three distinct classes \cite{ShervashidzeNINOSHERVASHIDZE2011}: Graph kernels based on random walks and paths \cite{Borgwardt2005}, graph kernels based on frequencies of limited size subgraphs or graphlets \cite{Shervashidze2009}, and graph kernels based on subtree patterns where a similarity matrix between two graphs is defined by the number of matching subtrees in each graph \cite{Harchaoui2007}.


Although kernels are well suited to varying dimensions of graphs, their scalability is limited. In many cases they scale poorly to large graphs \cite{Shervashidze2010} and given their reliance on SVMs or full computation of a kernel matrix, they become intractable for large numbers of graphs.

\subsection{Graph Neural Networks}

Convolutional and recurrent neural networks, while successful in many domains, struggle with a graph inputs because of the arbitrary order in which the nodes of each instance may appear \cite{Ying2018b}. For node level tasks (i.e. node classification and link prediction) graph neural networks (GNNs) \cite{Scarselli2009} handle this issue well by integrating the neural network structure with that of the graph. In each layer, state associated with each node is propagated to its neighbours via a learned filter and then a non linear function is applied. Thus the network's inner layers learn a latent representation for each node. For example, the GCN \cite{Kipf2016} uses a normalised version of the graph adjacency matrix to propagate node features while learning spectral filters. The GCN model has received significant attention in recent years with several extensions already in the literature \cite{Hamilton2017b,Atwood2016,Romero2018}.

However, for graph level tasks the GNN model and its variants do not handle permutations of the nodes well. For example, for a pair of isomorphic graphs, corresponding nodes would receive corresponding outputs, but for the graph as whole, the node level outputs will not necessarily be given in the same order, thus two graphs may be given shuffled representations presenting an obvious challenge for a downstream classifier. One approach to solving this problem is proposed by \cite{Verma2018} where a permutation invariant layer based on computing the covariance of the data is added to the GCN architecture; however, computation of the covariance matrix $O(n^3)$ in the number of nodes, thus not an attractive solution for large graphs.

\subsection{Mixed Models}

Combining elements of kernel methods and neural networks, mixed models use an explicit method in order to generate vectorized representations of the graphs which are then passed to a conventional neural network. One benefit to this approach being that explicit kernels are typically more efficient when dealing with large numbers of graphs \cite{Kriege2015}.

For example, PATCHY-SAN \cite{Niepert2016} extracts fixed size localized patches by applying a graph labelling procedure given by the WL algorithm \cite{Weisfeiler1968} and a canonical labeling procedure from \cite{Mckay2013} to order the nodes. It then uses these patches to form a 3-dimensional tensor for each graph that is passed to a standard CNN for classification.

A similar procedure is presented in \cite{Daniel2019} where, in order to overcome the potential loss of information associated with the CNN convolution operation, a Capsule network is used to perform the classification.

\section{PiNet}

Formally, we consider the problem of predicting a label $\vec{y}$ for an unseen test graph, given a training set $\mathcal{G}$ with corresponding labels $\mathcal{Y}_L$. Each graph $G \in \mathcal{G}$ is defined as the pair $G = (\vec{A}, \vec{X})$, where $\vec{A} \in \Real{N \times N}$ is the graph adjacency matrix, and $\vec{X} \in \Real{N \times d}$ is a corresponding matrix of $d$-dimensional node features. We fix $N$ to be the maximum number of nodes in each of the graphs in $\mathcal{G}$, padding empty rows and columns of $\vec{A}$ and $\vec{X}$ with zeros. Note, however, that these zero entries do not form part of the final graph representations used by the model.

\subsection{Model Architecture}

\begin{figure}[htb]
  \centering
  \includegraphics[width=\linewidth]{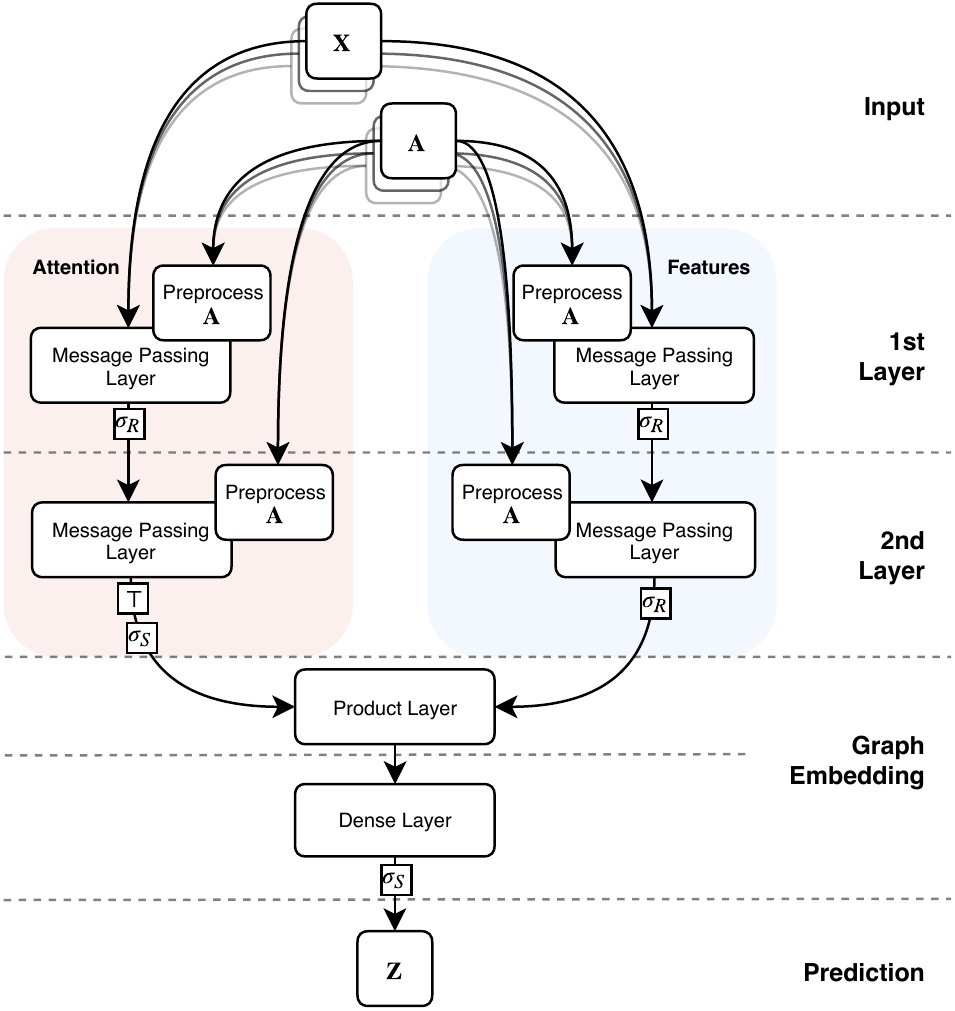}
  \caption{Model Architecture. $\vec{A}$ is the  adjacency matrix of a single graph, $\vec{X}$ the corresponding feature matrix, and $\vec{Z}$ the predicted label(s) for the complete batch of input graphs.}
  \label{fig:architecture}
\end{figure}

PiNet is an end-to-end deep neural network architecture that utilizes the permutation equivariance of graph convolutions in order to learn graph representations that are invariant to permutation.

As shown in \autoref{fig:architecture}, our model consists of a pair of double-stacked message passing layers combined by a matrix product. Let $\sigma_R$ and $\sigma_S$ denote the rectified linear unit and softmax activations functions, $\vec{\tilde{A}}$ the preprocessed adjacency matrix, and \small$F_X^{(1)}$\normalsize and \small$F_A^{(1)}$\normalsize the dimensions per node of the latent representation of the features and attention stacks respectively. The features and attention stacks each output a tensor ($Z_X$ and $Z_A$) with each element corresponding to an input graph given by the functions \small$z_X : (\Real{N \times N}, \Real{N \times d}) \rightarrow \Real{N \times F_X^{(1)}}$\normalsize and\small $z_A : (\Real{N \times N}, \Real{N \times d}) \rightarrow \Real{F_A^{(1)} \times N}$\normalsize respectively, where
\begin{equation}
  z_X(\vec{A}, \vec{X}) = \sigma_R \left(\vec{\tilde{A}} \; \sigma_R \left( \vec{\tilde{A}} \; \vec{X} \vec{W}_X^{(0)} \right) \vec{W}_X^{(1)} \right),
  \label{eq:zx}
\end{equation}
\begin{equation}
  z_A(\vec{A}, \vec{X}) = \sigma_S \left( \left[ \vec{\tilde{A}} \; \sigma_R \left( \vec{\tilde{A}} \; \vec{X} \vec{W}_A^{(0)} \right) \vec{W}_A^{(1)} \right]^\T \right),
  \label{eq:za}
\end{equation}
and
\begin{equation}
  \vec{\tilde{A}} = (p \vec{I} + (1 - p) \vec{D})^{-\frac{1}{2}} (\vec{A} + q \vec{I}) (p \vec{I} + (1 - p) \vec{D})^{-\frac{1}{2}},
  \label{eq:messages}
\end{equation}
where $\vec{I}$ is the identity matrix of order $N$, $\vec{D}$ is the degree matrix such that
\begin{equation}
  \label{eq:deg}
  \vec{D}_{ij} = d(\vec{A})_{ij} =
\left\{
	\begin{array}{ll}
	  \sum_{k=1}^N \vec{A}_{ik}  & \mbox{if } i = j, \\
	  0 & \mbox{otherwise,}
	\end{array}
\right.
\end{equation}
and $0 \le p, q \le 1$.

The use of softmax activation on the attention stack applies the constraint that outputs sum to $1$, thus preventing all node attention weightings from dropping to $0$ and keeping the resulting products within a reasonable range.

The trainable parameters $p$ and $q$ offer an extra attention mechanism that enables the model to weigh the importance of symmetric normalisation of the adjacency matrix, and the addition of self loops. \autoref{tbl:matrices} shows the four matrices given by the extreme cases of $p$ and $q$; however, intermediate combinations are of course possible. Also note that $p$ and $q$ may be different for each message passing layer.

\begin{table}[h!]
  \small
  \begin{tabularx}{\linewidth}{lXcc}
    \toprule
    Matrix & Definition & $p$ & $q$ \\
    \midrule
    Adjacency & $\vec{A}$ & $1$ & $0$ \\
    $\vec{A}$ with S.L. & $\vec{A} + \vec{I}$ & $1$ & $1$ \\
    Sym Norm Adjacency & $\vec{D}^{-\frac{1}{2}} \vec{A} \vec{D}^{-\frac{1}{2}}$ & $0$ & $0$ \\
    Sym Norm $\vec{A}$ with S.L. & $\vec{D}^{-\frac{1}{2}} (\vec{A} + \vec{I}) \vec{D}^{-\frac{1}{2}}$ & $0$ & $1$ \\
    \bottomrule
  \end{tabularx}
  \caption{The four message passing matrices given by the extreme values of $p$ and $q$, i.e. $(p, q) \in \{0, 1\} \times \{0, 1\}$.}
  \label{tbl:matrices}
\end{table}

As seen in \autoref{fig:prop}, the weighting on the self loops given by $q$ allows the model to include each node's own state in the previous layer as an input to the next layer.

\begin{figure}[htb]
  \centering
  \includegraphics[width=\linewidth]{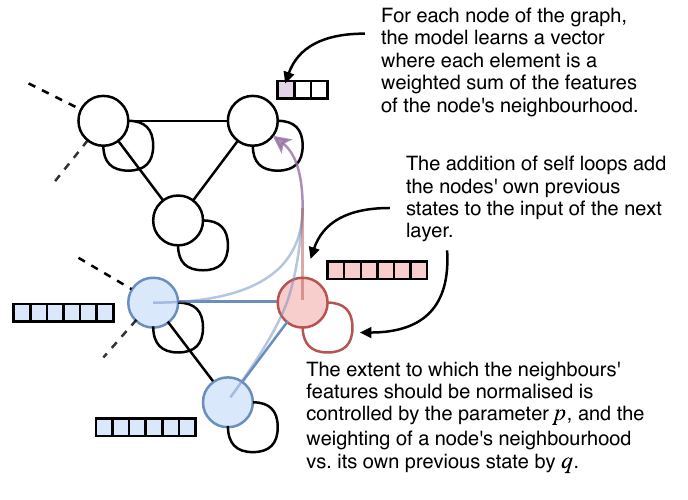}
  \caption{Layer propagation}
  \label{fig:prop}
\end{figure}

The final output of the model is the matrix $\vec{Z}$, where each row $i$ is the predicted label given by the function
\small
\begin{equation}
  \label{eq:z}
  z(\vec{A}, \vec{X}) = \sigma_S \left[ g\left( z_A\left(\vec{A}, \vec{X}\right) \cdot z_X\left(\vec{A}, \vec{X}\right) \right) \vec{W}_D \right] \in \Real{C},
\end{equation}
\normalsize
where \small$g : \Real{F_A^{(1)} \times F_X^{(1)}} \rightarrow \Real{F_A^{(1)} \cdot F_X^{(1)}}$\normalsize is a reshape function. Note that since $Z_A$ has been transposed, the columns of each matrix (corresponding directly to the nodes of the graph) align exactly with the rows of each matrix in $Z_X$, thus $\forall i \in |\mathcal{G}|$ the product \small$\left[Z_A\right]_i \cdot \left[Z_X\right]_i \in \Real{F_A^{(1)} \times F_X^{(1)}}$\normalsize aligns the weights for each node learned by the attention stack with the latent features learned by the features stack, resulting in a weighted sum of the features of each node without being affected by permutations of the nodes at input.

Finally, to train the model, we minimise the categorical cross-entropy
\small
\begin{equation}
  \mathcal{L} = - \sum_{l \in \mathcal{Y}_L} \sum_{f=1}^C \vec{Y}_{lf} \ln \vec{Z}_{lf},
\end{equation}
\normalsize
where $\vec{Y}$ is the target labels, and $C$ the number of classes.

\subsection{Permutation Invariance}

Here, following the necessary definitions, we provide proof of our model's invariance to permutation of the nodes in the input graphs.

\begin{definition}[Permutation]
  A permutation $\pi$ is a bijection of a set $S$ onto itself, such that it moves an object at position $i$ to position $\pi(i)$. For example, a permutation of $\pi$ to the vector $(x_1, x_2, \dots, x_n)$ would be $(x_{\pi^{-1}(1)}, x_{\pi^{-1}(2)}, \dots, x_{\pi^{-1}(n)})$.
\end{definition}

\begin{definition}[Permutation Matrix]
  A permutation matrix $\P_\pi \in \{0, 1\}^{n \times n}$ is an orthogonal matrix such that:
  \begin{align}
    \left[ \P_\pi \vec{X} \right]_{ij} = \vec{X}_{\pi^{-1}(i)j}, \qquad \vec{X} \in \Real{n \times m} \\
    \left[ \vec{X} \P_\pi^\T \right]_{ij} = \vec{X}_{i\pi^{-1}(j)}, \qquad \vec{X} \in \Real{m \times n}
  \end{align}
  thus, for a square matrix $\vec{A} \in \Real{n \times n}$,
  \begin{equation}
    \left[ \P_\pi \vec{A} \P_\pi^\T \right]_{ij} = \vec{A}_{\pi^{-1}(i)\pi^{-1}(j)}.
  \end{equation}
\end{definition}

\begin{definition}[Graph Permutation]
  We define the permutation of $\pi$ to a graph $G$ as a mapping of the node indices, i.e.
  \begin{equation}
    \P_\pi G = (\P_\pi \vec{A} \P_\pi^\T, \P_\pi \vec{X})
  \end{equation}
  \label{def:graphperm}
\end{definition}

\begin{definition}[Permutation Invariance]
  Let $\mathcal{P}_n$ be the set of all valid permutation matrices of order $n$, then a function $f$ is invariant to row permutation iff
  \begin{equation}
    f(\vec{X}) = f(\P_\pi \vec{X}), \qquad \forall \vec{X} \in \Real{n \times m}, \P_\pi \in \mathcal{P}_n,
  \end{equation}
  and $f$ is invariant to column permutation iff
  \begin{equation}
    f(\vec{X}) = f(\vec{X} \P_\pi^\T), \qquad \forall \vec{X} \in \Real{m \times n}, \P_\pi \in \mathcal{P}_n.
  \end{equation}
\end{definition}

\begin{definition}[Permutation Equivariance]
  Let $\mathcal{P}_n$ be the set of all valid permutation matrices of order $n$, then a function $f$ is \emph{equivariant} to row permutation iff
  \begin{equation}
    \P_\pi f(\vec{X}) = f(\P_\pi \vec{X}), \qquad \forall \vec{X} \in \Real{n \times m}, \P_\pi \in \mathcal{P}_n,
  \end{equation}
  similarly, for column permutation $f(\vec{X}) \P_\pi^\T = f(\vec{X} \P_\pi^\T)$.
\end{definition}

\begin{lemma}
  \label{lma:dot}
  For any matrices $\vec{A} \in \Real{n \times m}$ and $\vec{B} \in \Real{n \times p}$, and any permutation $\pi$, the product $\vec{A}^\T \cdot \vec{B}$ remains unchanged by a permutation of $\pi$ applied to the rows of $\vec{A}$ and $\vec{B}$, i.e.
  \begin{equation}
    (\P_\pi \vec{A})^\top \o \P_\pi \vec{B} = \vec{A}^\top \o \vec{B}.
  \end{equation}
\end{lemma}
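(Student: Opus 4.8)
The plan is to reduce the claim to a single structural fact about permutation matrices: that they are orthogonal, i.e. $\P_\pi^\T \P_\pi = \I$. This is already asserted in the Definition of Permutation Matrix, so the whole lemma amounts to unpacking the transpose of a product and applying orthogonality. Note that since $\P_\pi$ has order $n$, which matches the row count of both $\vec{A} \in \Real{n \times m}$ and $\vec{B} \in \Real{n \times p}$, every product below is well defined and the identity that appears is $\I_n$.

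First I would rewrite the left-hand side using the standard identity $(\P_\pi \vec{A})^\T = \vec{A}^\T \P_\pi^\T$, which gives
\begin{equation}
  (\P_\pi \vec{A})^\T \o \P_\pi \vec{B} = \vec{A}^\T \P_\pi^\T \P_\pi \vec{B}.
\end{equation}
Second, I would substitute $\P_\pi^\T \P_\pi = \I$ into the middle of the expression, collapsing it to $\vec{A}^\T \I \vec{B} = \vec{A}^\T \vec{B}$, which is exactly the right-hand side. That completes the argument in two lines.

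The only point that could be called an obstacle — and it is a mild one — is whether to invoke orthogonality as given or to justify it from scratch. If a self-contained derivation is preferred, I would verify $\P_\pi^\T \P_\pi = \I$ at the level of entries: the $(i,j)$ entry is $\sum_k [\P_\pi]_{ki}[\P_\pi]_{kj}$, and because each column of a permutation matrix is a standard basis vector (a single $1$ with the rest $0$), this sum equals $1$ when $i = j$ and $0$ otherwise, so the product is $\I$. Either route establishes the key relation immediately, and the lemma follows at once; no genuine difficulty arises.
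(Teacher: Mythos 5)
Your proof is correct, but it takes a genuinely different route from the paper's. The paper argues entrywise: it writes
\begin{equation*}
  \left[ (\P_\pi \vec{A})^\T \o \P_\pi \vec{B} \right]_{ij} = \sum_k \vec{A}_{\pi^{-1}(k)i} \vec{B}_{\pi^{-1}(k)j},
\end{equation*}
observes that the sum is merely a reindexing of $\sum_k \vec{A}_{ki}\vec{B}_{kj}$ by the bijection $\pi^{-1}$ with the factor pairs kept in correspondence, and concludes by commutativity of addition. Your argument instead reduces everything to the transpose-of-a-product identity plus orthogonality $\P_\pi^\T \P_\pi = \I$, which the paper's own Definition of Permutation Matrix asserts and which the paper itself uses freely elsewhere (e.g.\ in Lemma~\ref{lma:pre} and the main theorem). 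Your version is shorter and arguably cleaner; the paper's version is more self-contained in that it never needs orthogonality and makes explicit \emph{why} the product is permutation-insensitive (the rows of $\vec{A}$ and $\vec{B}$ are shuffled in lockstep, so the paired terms survive). Your optional entrywise verification of $\P_\pi^\T \P_\pi = \I$ would restore that self-containedness if desired. Both proofs are complete and correct.
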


\newcommand{\pin}[1]{{\pi^{-1}{(#1)}}}

\begin{proof}
    Consider the vector product
    \begin{equation}
    \mathbf{a}^\T \cdot \mathbf{b} = \sum_k^n \vec{a}_k \vec{b}_k.
  \end{equation}
    We permute the rows of $\vec{a}$ and the columns of $\vec{b}$
    \begin{equation}
    \P_\pi \vec{a}^\T \cdot \vec{b} \P_\pi^\T = \sum_k^n \vec{a}_\pin{k} \vec{b}_\pin{k} \\
  \end{equation}
   and observe a reordering of terms, in which the factor pairs remain in correspondence. Since addition is commutative, then
    \begin{equation}
    \sum_k^n \vec{a}_\pin{k} \vec{b}_\pin{k} = \sum_k^n \vec{a}_k \vec{b}_k \\
  \end{equation}
    \begin{equation}
    \implies \P_\pi \vec{a}^\T \cdot \vec{b} \P_\pi^\T = \vec{a}^\T \cdot \vec{b}. \\
  \end{equation}
  By the same logic, we see that
  \begin{align}
    \left[ (\P_\pi \vec{A})^\top \o \P_\pi \vec{B} \right]_{ij} &= \sum_k^n \vec{A}_{\pin{k}i} \vec{B}_{\pin{k}j} \\
    &= \sum_k^n \vec{A}_{ki} \vec{B}_{kj} \\
    &= \left[ \vec{A}^\top \o \vec{B} \right]_{ij} \\
    \implies (\P_\pi \vec{A})^\top \o \P_\pi \vec{B} &= \vec{A}^\top \o \vec{B}
  \end{align}
\end{proof}

\begin{lemma}
  \label{lma:pre}
  If $\vec{\tilde{A}}$ is the preprocessed version of $\vec{A}$, then $\P_\pi \vec{\tilde{A}} \P_\pi^\T$ is the preprocessed version of $\P_\pi \vec{A} \P_\pi^\T$, i.e. If
  \begin{equation}
    \vec{\tilde{A}} = (p \vec{I} + (1 - p) d(\vec{A}))^{-\frac{1}{2}} (\vec{A} + q \vec{I}) (p \vec{I} + (1 - p) d(\vec{A}))^{-\frac{1}{2}}
  \end{equation}
    then
  \begin{multline}
  \P_\pi \vec{\tilde{A}} \P_\pi^\T = (p \vec{I} + (1 - p) d(\vec{\P_\pi \vec{A} \P_\pi^\T}))^{-\frac{1}{2}} (\P_\pi \vec{A} \P_\pi^\T + q \vec{I}) \\ (p \vec{I} + (1 - p) d(\vec{\P_\pi \vec{A} \P_\pi^\T}))^{-\frac{1}{2}},
  \label{eq:pap}
  \end{multline}
  where $d(\vec{A})$ is the diagonal degree matrix of $\vec{A}$ as defined in \autoref{eq:deg}.
\end{lemma}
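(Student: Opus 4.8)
The plan is to conjugate the entire preprocessing formula by $\P_\pi$ and push the conjugation inward through each factor, exploiting the orthogonality $\P_\pi^\T \P_\pi = \I$. Writing $\vec{M} = p\I + (1-p)d(\vec{A})$ for the (diagonal) normalising matrix, I start from
\begin{equation}
  \P_\pi \vec{\tilde{A}} \P_\pi^\T = \P_\pi \vec{M}^{-\frac12} (\vec{A} + q\I) \vec{M}^{-\frac12} \P_\pi^\T
\end{equation}
and insert $\P_\pi^\T \P_\pi = \I$ between consecutive factors, rewriting the right-hand side as a product of three separately conjugated blocks
\begin{equation}
  \left(\P_\pi \vec{M}^{-\frac12} \P_\pi^\T\right)\left(\P_\pi (\vec{A}+q\I) \P_\pi^\T\right)\left(\P_\pi \vec{M}^{-\frac12} \P_\pi^\T\right).
\end{equation}

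The middle block is the easy one: since $\P_\pi \I \P_\pi^\T = \I$, it follows that $\P_\pi(\vec{A}+q\I)\P_\pi^\T = \P_\pi \vec{A} \P_\pi^\T + q\I$, which already matches the central factor required by \autoref{eq:pap}. For the outer blocks I use the fact that conjugation by a permutation matrix commutes with any entrywise function of a diagonal matrix: because $\vec{M}$ is diagonal, $\P_\pi \vec{M} \P_\pi^\T$ merely permutes its diagonal entries, so $\P_\pi \vec{M}^{-\frac12} \P_\pi^\T = (\P_\pi \vec{M} \P_\pi^\T)^{-\frac12}$. Equivalently, one repeatedly inserts $\P_\pi^\T\P_\pi=\I$ inside a power to get $(\P_\pi \vec{M}\P_\pi^\T)^k = \P_\pi \vec{M}^k \P_\pi^\T$ and extends to the inverse square root. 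Expanding the inner conjugation linearly then gives $\P_\pi \vec{M}\P_\pi^\T = p\I + (1-p)\,\P_\pi d(\vec{A})\P_\pi^\T$.

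This reduces everything to the crux of the lemma: the equivariance of the degree operator,
\begin{equation}
  \P_\pi\, d(\vec{A})\, \P_\pi^\T = d(\P_\pi \vec{A} \P_\pi^\T).
\end{equation}
I expect this to be the main obstacle, since it is where the combinatorics of the permutation actually enter; the surrounding manipulations are purely formal consequences of orthogonality. I would prove it entrywise using the permutation-matrix identities from the Permutation Matrix definition: on the left, since $d(\vec{A})$ is diagonal, $[\P_\pi d(\vec{A})\P_\pi^\T]_{ij} = d(\vec{A})_{\pin{i}\pin{j}}$ vanishes off the diagonal and equals $\sum_k \vec{A}_{\pin{i}k}$ on it; on the right, $d(\P_\pi \vec{A}\P_\pi^\T)_{ii} = \sum_k [\P_\pi \vec{A}\P_\pi^\T]_{ik} = \sum_k \vec{A}_{\pin{i}\pin{k}}$. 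The key observation is that because $\pi$ is a bijection, reindexing the summation variable $k \mapsto \pin{k}$ leaves the total sum unchanged, so the two diagonal entries agree. Substituting this identity back into the outer blocks turns them into exactly the normalising factors $(p\I + (1-p)d(\P_\pi \vec{A}\P_\pi^\T))^{-\frac12}$ on the right-hand side of \autoref{eq:pap}, completing the proof.
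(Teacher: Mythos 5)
Your proof is correct and follows essentially the same route as the paper's: both reduce the claim to the equivariance of the degree operator, $\P_\pi d(\vec{A})\P_\pi^\T = d(\P_\pi\vec{A}\P_\pi^\T)$, combined with the fact that the inverse square root commutes with conjugation for diagonal matrices and that $\P_\pi^\T\P_\pi = \I$ lets the three conjugated blocks recombine. The only difference is cosmetic — you work from the left-hand side outward while the paper assembles the right-hand side — and you are somewhat more explicit about the reindexing $k \mapsto \pi^{-1}(k)$ in the degree sum, which the paper leaves implicit.
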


\begin{proof}
  From \autoref{eq:deg},
  \begin{align}
    d(\vec{\P_\pi A \P_\pi^\T})_{ij} &=
\left\{
	\begin{array}{ll}
	  \sum_{k=1}^N \vec{A}_{\pi^{-1}(i) \pi^{-1}(k)}  & \mbox{if } i = j, \\
	  0 & \mbox{otherwise}
	\end{array}
\right. \\
    &= \P_\pi d(\vec{A}) \P_\pi^\T
  \end{align}
  Considering each factor of \autoref{eq:pap} (RHS), we observe that
  \begin{multline}
  (p \vec{I} + (1 - p) d(\vec{\P_\pi \vec{A} \P_\pi^\T}))^{-\frac{1}{2}} \\
    = (p \vec{I} + (1 - p) \P_\pi d(\vec{\vec{A} }) \P_\pi^\T )^{-\frac{1}{2}}
  \end{multline}
  \begin{align}
    &= (p \vec{I} + \P_\pi \left[ (1 - p) d(\vec{\vec{A} }) \right] \P_\pi^\T )^{-\frac{1}{2}} \\
    &= (\P_\pi \left[ p \vec{I} + (1 - p) d(\vec{\vec{A} }) \right] \P_\pi^\T )^{-\frac{1}{2}}
    \intertext{and since the matrix is diagonal}
    &= \P_\pi ( p \vec{I} + (1 - p) d(\vec{\vec{A} }) )^{-\frac{1}{2}}\P_\pi^\T.
    \label{eq:firstfactor}
  \end{align}
  We also have
  \begin{equation}
    (\P_\pi \vec{A} \P_\pi^\T + q \vec{I}) = \P_\pi (\vec{A} + q \vec{I}) \P_\pi^\T.
    \label{eq:secondfactor}
  \end{equation}
  By \autoref{eq:firstfactor} and \ref{eq:secondfactor},  
  \small
  \begin{multline}
  (p \vec{I} + (1 - p) d(\vec{\P_\pi \vec{A} \P_\pi^\T}))^{-\frac{1}{2}} (\P_\pi \vec{A} \P_\pi^\T + q \vec{I}) \\
  \qquad \qquad \qquad \qquad (p \vec{I} + (1 - p) d(\vec{\P_\pi \vec{A} \P_\pi^\T}))^{-\frac{1}{2}} \\
  = \P_\pi (p \vec{I} + (1 - p) d(\vec{A}))^{-\frac{1}{2}} \P_\pi^\T \P_\pi (\vec{A} + q \vec{I}) \P_\pi^\T \qquad \qquad \\
  \qquad \qquad \qquad \qquad \qquad \P_\pi (p \vec{I} + (1 - p) d(\vec{A}))^{-\frac{1}{2}} \P_\pi^\T,
  \end{multline}
  and since $\P_\pi$ is orthogonal, $\P_\pi^\T \P_\pi = \vec{I}$, so
  \begin{multline}
  = \P_\pi (p \vec{I} + (1 - p) d(\vec{A}))^{-\frac{1}{2}} (\vec{A} + q \vec{I}) (p \vec{I} + (1 - p) d(\vec{A}))^{-\frac{1}{2}} \P_\pi^\T
  \end{multline}
  \normalsize
\end{proof}

\begin{theorem}
  For any input graph $G$, and any permutation $\pi$ applied to $G$, the output of PiNet is equal, i.e.
  \begin{equation}
    z(G) = z( \P_\pi G ),
  \end{equation}
  where $z : (\Real{N \times N}, \Real{N \times d}) \rightarrow \Real{C}$ is the forward pass function of PiNet given in \autoref{eq:z}.
\end{theorem}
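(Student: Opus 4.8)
The plan is to decompose PiNet into its two parallel stacks, establish the appropriate notion of equivariance for each, and then show that the combining matrix product cancels the permutation entirely, after which the reshape $g$, the dense weights $\vec{W}_D$, and the final softmax act on an \emph{identical} argument and therefore return an identical output.

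First I would record the two facts that drive every cancellation. By Lemma~\ref{lma:pre}, preprocessing commutes with permutation, so the preprocessed adjacency of $\P_\pi G$ is exactly $\P_\pi \vec{\tilde{A}} \PT$; and since $\P_\pi$ is orthogonal, $\PT \P_\pi = \I$. I would also note that the elementwise activation $\sigma_R$ commutes with any row permutation, i.e.\ $\sigma_R(\P_\pi \vec{M}) = \P_\pi \sigma_R(\vec{M})$, and that $\sigma_S$, applied independently along the node axis, is equivariant to permutation of that axis.

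Next I would prove that the features stack is row-equivariant, $z_X(\P_\pi G) = \P_\pi z_X(G)$. Substituting $\vec{\tilde{A}} \mapsto \P_\pi \vec{\tilde{A}} \PT$ and $\vec{X} \mapsto \P_\pi \vec{X}$ into \autoref{eq:zx} and working outward from the innermost product, each interior pair $\PT \P_\pi$ collapses to $\I$, each right-multiplication by a weight matrix is untouched by the row permutation, and each $\sigma_R$ lets the leading $\P_\pi$ pass through, so a single $\P_\pi$ survives on the left. The same computation applied to the bracketed term of \autoref{eq:za} shows that the pre-transpose attention map is row-equivariant; taking the transpose and pushing the (node-axis) softmax through then gives the column-equivariance $z_A(\P_\pi G) = z_A(G)\PT$.

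Finally I would combine the two. Writing $z_A(G) = \vec{U}^\T$ and using the two equivariances, the product becomes $z_A(\P_\pi G)\o z_X(\P_\pi G) = (\P_\pi \vec{U})^\T \o \P_\pi z_X(G)$, which by Lemma~\ref{lma:dot} equals $\vec{U}^\T \o z_X(G) = z_A(G)\o z_X(G)$; equivalently one simply cancels $\PT\P_\pi = \I$ in the middle. Hence the argument of $g$ in \autoref{eq:z} is identical for $G$ and $\P_\pi G$, and since $g$, $\vec{W}_D$, and the outer $\sigma_S$ are deterministic functions of that argument, $z(G) = z(\P_\pi G)$. I expect the main obstacle to be bookkeeping around the activations rather than the linear algebra: the orthogonality cancellations are routine, but one must justify that $\sigma_R$ commutes with row permutations and, more delicately, that the softmax in $z_A$ commutes with the column permutation induced by the transpose. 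This hinges on $\sigma_S$ acting along the node axis, so that permuting nodes before or after normalisation agrees; getting this axis convention right is the one place where a careless argument could break.
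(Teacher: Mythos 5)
Your proposal is correct and follows essentially the same route as the paper's proof: establish row-equivariance of the features stack, column-equivariance of the attention stack (with the softmax-axis subtlety the paper handles via a column-wise softmax $\sigma_{S'}$), and cancel the permutation in the product via Lemma~\ref{lma:dot} (equivalently $\P_\pi^\T\P_\pi = \I$) before the reshape, dense layer, and outer softmax. No gaps; the one delicate point you flag — that the softmax must act along the node axis for the equivariance to hold — is exactly the point the paper's proof makes explicit.
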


\begin{proof}
  By \autoref{eq:zx}, Definition \ref{def:graphperm} and Lemma \ref{lma:pre},
  \begin{align}
    \label{eq:zxstart}
    z_X&(\P_\pi G) = z_X(\P_\pi \vec{A} \P_\pi^\T, \P_\pi \vec{X}) \\
    &= \mbox{\small $\sigma_R \left(\vec{\P_\pi \tilde{A} \P_\pi^\T} \; \sigma_R \left( \vec{\P_\pi \tilde{A} \P_\pi^\T} \;\P_\pi  \vec{X} \vec{W}_X^{(0)} \right) \vec{W}_X^{(1)} \right) $}
  \end{align}
$\P_\pi$ is orthogonal, so $\P_\pi^\T \P_\pi = \vec{I}$, giving
\begin{multline}
    z_X(\P_\pi G) \\
    = \sigma_R \left(\vec{\P_\pi \tilde{A} \P_\pi^\T} \; \sigma_R \left( \vec{\P_\pi \tilde{A} } \; \vec{X} \vec{W}_X^{(0)} \right) \vec{W}_X^{(1)} \right)
\end{multline}
Since $\sigma_R$ is an element-wise operation,
  \begin{equation}
    \sigma_R(\P_\pi \vec{X}) = \P_\pi \cdot \sigma_R(\vec{X}),
  \end{equation}
then
  \begin{multline}
    z_X(\P_\pi G) \\
    = \mbox{\small$ \sigma_R \left(\vec{\P_\pi \tilde{A} \P_\pi^\T} \; \P_\pi \cdot \sigma_R \left( \vec{\tilde{A} } \; \vec{X} \vec{W}_X^{(0)} \right) \vec{W}_X^{(1)} \right)$}
  \end{multline}
  \begin{align}
    \label{eq:zxlater}
    &= \sigma_R \left(\vec{\P_\pi \tilde{A}} \; \sigma_R \left( \vec{\tilde{A} } \; \vec{X} \vec{W}_X^{(0)} \right) \vec{W}_X^{(1)} \right) \\
    &= \P_\pi \cdot \sigma_R \left(\vec{\tilde{A}} \; \sigma_R \left( \vec{\tilde{A} } \; \vec{X} \vec{W}_X^{(0)} \right) \vec{W}_X^{(1)} \right) \\
    &= \P_\pi \cdot z_X(\vec{A}, \vec{X}) \\
    \label{eq:zxfinal}
    &= \P_\pi \cdot z_X(G).
  \end{align}
  By the same logic as \autoref{eq:zxstart} to \ref{eq:zxlater},
  \small
  \begin{multline}
    z_A(\P_\pi G) \\
    = \mbox{\small$ \sigma_S \left( \left[ \vec{\P_\pi \tilde{A} \P_\pi^\T} \; \sigma_R \left( \vec{\P_\pi \tilde{A} \P_\pi^\T} \;\P_\pi  \vec{X} \vec{W}_X^{(0)} \right) \vec{W}_X^{(1)} \right]^\T \right)$}
  \end{multline}
  \normalsize
  \begin{align}
    &= \sigma_S \left( \left[ \vec{\P_\pi \tilde{A}} \; \sigma_R \left( \vec{\tilde{A} } \; \vec{X} \vec{W}_X^{(0)} \right) \vec{W}_X^{(1)} \right]^\T \right) \\
    \label{eq:zamiddle}
    &= \left[ \sigma_{S'} \left( \vec{\P_\pi \tilde{A}} \; \sigma_R \left( \vec{\tilde{A} } \; \vec{X} \vec{W}_X^{(0)} \right) \vec{W}_X^{(1)} \right) \right]^\T,
  \end{align}
where $\sigma_{S'}$ is a column-wise softmax
  \begin{equation}
    \sigma_{S'}(\vec{X})_{ij} = \frac{e^{\vec{X}_{ij}}} {\sum_k e^{\vec{X}_{ik}}}.
  \end{equation}
When the rows of its input are permuted by $\pi$,
  \begin{equation}
    \sigma_{S'}(\P_\pi \vec{X})_{ij} = \frac{e^{\vec{X}_{\pi^{-1}(i)j}}} {\sum_k e^{\vec{X}_{\pi^{-1}(i)k}}}
    \label{eq:softmax}
  \end{equation}
  we observe that rows of the output are also permuted by $\pi$, thus by \autoref{eq:zamiddle} and \ref{eq:softmax}
  \small
  \begin{equation}
    z_A(\P_\pi G) = \left[ \P_\pi \cdot \sigma_{S'} \left( \vec{\tilde{A}} \; \sigma_R \left( \vec{\tilde{A} } \; \vec{X} \vec{W}_X^{(0)} \right) \vec{W}_X^{(1)} \right) \right]^\T.
    \label{eq:zafinal}
  \end{equation}\normalsize
From \autoref{eq:z}
\begin{equation}
  \label{eq:zp}
  z(\P_\pi G) = \sigma_S \left[ g\left( z_A \left( \P_\pi G\right) \cdot z_X\left(\P_\pi G\right) \right) \vec{W}_D \right],
\end{equation}
and by \autoref{eq:zxfinal} and \ref{eq:zafinal} we see that
\small
  \begin{multline}
    \label{eq:zazxstart}
    z_A \left( \P_\pi G\right) \cdot z_X\left(\P_\pi G\right) \\
    = \left[ \P_\pi \cdot \sigma_{S'} \left( \vec{\tilde{A}} \; \sigma_R \left( \vec{\tilde{A} } \; \vec{X} \vec{W}_X^{(0)} \right) \vec{W}_X^{(1)} \right) \right]^\T \\
    \cdot \left[ \P_\pi \cdot z_X(G) \right],
  \end{multline}
  \normalsize
which by Lemma \ref{lma:dot}
  \begin{align}
    &= \left[ \sigma_{S'} \left( \vec{\tilde{A}} \; \sigma_R \left( \vec{\tilde{A} } \; \vec{X} \vec{W}_X^{(0)} \right) \vec{W}_X^{(1)} \right) \right]^\T \cdot \left[ z_X(G) \right] \\
    &= \sigma_{S} \left( \left[ \vec{\tilde{A}} \; \sigma_R \left( \vec{\tilde{A} } \; \vec{X} \vec{W}_X^{(0)} \right) \vec{W}_X^{(1)} \right]^\T \right) \cdot \left[ z_X(G) \right] \\
    \label{eq:zazxend}
    &=  z_A(G) \cdot z_X(G)
  \end{align}
  Finally, by \autoref{eq:zp} and \ref{eq:zazxstart} to \ref{eq:zazxend},
  \begin{align}
    z(\P_\pi G) &= \sigma_S \left[ g\left( z_A \left( G\right) \cdot z_X\left(G\right) \right) \vec{W}_D \right] \\
  &= z(G).
  \end{align}
\end{proof}

\subsection{Implementation}

To implement PiNet we use Keras + Tensorflow. The model operates on batches and uses a mixture of Scipy sparse matrices and Numpy arrays to represent the graphs and their features. Full source code for PiNet is available at LINK(reveals authors).

\section{Experiments}

We conduct three experiments: We empirically verify the utility of our model's invariance to permutation with a graph isomorphism classification task, we evaluate the effect of different message passing matrices and the model's ability to select an appropriate message passing configuration, and we compare the model's classification performance against existing graph classifiers on a set of standard molecule classification datasets. We next describe the data used followed by a description of each experiment.

\subsection{Datasets}

For the isomorphism test, we generate a dataset of 500 graphs. To create sufficient challenge, in each randomly sampled Erdos Reny \cite{ErdOs1960} graph, we fix the number of nodes, and the node degree distributions, to be constant. We generate the dataset according to \autoref{alg:graphGen}, with the parameters: $N = 50$, $C = 5$, $N_g = 100$, and $p = 0.15$.

For the final two experiments we use the binary classification molecule datasets detailed in \autoref{tbl:graphstats}.

\begin{algorithm}[htb]
  \small
   \caption{Graph Dataset Generation}
   \label{alg:graphGen}
\begin{algorithmic}

\STATE {\bfseries Input:} Number of nodes $N$,
number of classes $C$,
number of graphs per class $N_{g}$,
edge probability $p$

\STATE seedGraph $\gets$ SampleErdosRenyiGraph($N$, $p$)
 \WHILE {seedGraph not fully connected  }
\STATE{ SampleErdosRenyiGraph($N$, $p$)}
 \ENDWHILE

\STATE $D$ $\gets$ Array[]

\FOR{ \textbf{each} $c$ {\bfseries in} $C$}

	\STATE $G_c$ $\gets$ Array[]

	\STATE S $\gets$ getDegreeSequence(seedGraph)
	\STATE sampleGraph  $\gets$ GenerateGraph(S)	
	
	\FOR{ each $i$  {\bfseries in} $N_{g}$}

		\STATE $p_g$ $\gets$ permute(sampleGraph.adjacencyMatrix)
		\STATE $p_g$ $\gets$ relabel(pg.nodes)
		
		\STATE $G_c[i]$ $\gets$ $p_g$
		
	\ENDFOR
	\STATE $D[c]$ $\gets$ $G_c$
\ENDFOR

\STATE {\bfseries Output:} D

\end{algorithmic}
\end{algorithm}

\begin{table}[b]
  \small
  \centering
  \begin{tabularx}{\linewidth}{lrrrrr}
  \toprule
 & MUTAG & NCI1 & NCI109 & PTC & PROTEINS \\ \midrule
 $|\mathcal{G}|$ & 188 & 4110 & 4127 & 344 & 1113 \\
 Max. $|V|$ & 28 & 111 & 111 & 109 & 620 \\
 Mean $|V|$ & 18 & 29.8 & 29.6 & 25.56 & 39.06 \\
 $d$ & 7 & 37 & 38 & 18 & 3 \\
 \% of +ve & 66.49 & 50.05 & 50.38 & 39.51 & 59.57 \\
\bottomrule
\end{tabularx}
\caption{Binary classification molecule datasets. $|\mathcal{G}|$ is the number of graphs, $|V|$ the number of nodes, and $d$ the dimensions of the node features.}
\label{tbl:graphstats}
\end{table}

\subsection{Isomorphism Test}
\label{sse:isomorphismexp}

We test PiNet's ability to recognise isomorphic graphs - specifically, unseen permutations of given training graphs. For a baseline, we test against two variants of the GCN \cite{Kipf2016}, one in which the graph level representation is given by a sum of the node representations, and the other in which we apply a dense layer directly to the node level outputs of the GCN. We also compare against two state of the art graph classifiers: the WL Kernel \cite{ShervashidzeNINOSHERVASHIDZE2011} and PATCHY-SAN \cite{Niepert2016}. We perform 10 trials for every training sample size.

\subsection{Message Passing Mechanisms}
\label{exp:message}

We study the impact on classification accuracy of the four matrices shown in \autoref{tbl:matrices} that facilitate message passing between nodes, alongside the parametrised matrix shown in \autoref{eq:messages}, in which the extent to which to normalise neighbours' values and include the node's own state as input are controlled by the learned parameters $p$ and $q$. Note that $p$ and $q$ are learned for each message passing layer and are not required to be the same, however, for each of the matrices from \autoref{tbl:matrices} we test, we use the same matrix in all four message passing layers.

For each matrix, we perform a 10-fold cross validation and record the classification accuracy. For all runs we use the following hyper-parameters: batch size = 50, epochs = 200, first layer latent feature size \small$F_X^{(0)}$\normalsize = \small$F_A^{(0)}$\normalsize = 100, second layer latent feature size \small$F_X^{(1)}$\normalsize = \small$F_A^{(1)}$\normalsize = 64, and learning rate = $10^{-3}$.

\subsection{Comparison Against Existing Methods}

\begin{table*}[b]
  \small
  \centering
\begin{tabular}{lccccc}
\toprule
 &            MUTAG &            NCI-1 &          NCI-109 &         PROTEINS &              PTC \\
\midrule
GCN + Dense &  $.86 \pm .06$ &  $.73 \pm .03$ &  $.72 \pm .02$ &  $.71 \pm .04$ &  $.63 \pm .07$ \\
GCN + Sum &  $.86 \pm .05$ &  $.72 \pm .03$ &  $.73 \pm .03$ &  $.74 \pm .04$ &  $.61 \pm .05$ \\
PATCHY-SAN & $.85 \pm .06$ & $.58 \pm .02$ & $.58 \pm .03$ & $.70 \pm .02$ & $.58 \pm .02$ \\
WLKernel   &  $.68 \pm .00^{*}$ &  $.53 \pm .02^{*}$ &  $.53 \pm .03^{*}$ &  $.61 \pm .01^{*}$ &  $.62 \pm .03$ \\
\midrule
PiNet (Manual $p$ and $q$) &  $.87 \pm .08$ &  $.74 \pm .03$ &  $.73 \pm .03$ &  $.75 \pm .06$ &  $.63 \pm .06$ \\
PiNet (Learned $p$ and $q$) &  $.88 \pm .07$ &  $.74 \pm .02$ &  $.71 \pm .04$ &  $.75 \pm .06$ &  $.63 \pm .04$ \\
\bottomrule
\end{tabular}
\caption{Mean classification accuracies for each classifier. For manual search the values $p$ and $q$ as follows: MUTAG and PROTEINS $p = 1, q = 0$, NCI-1 and NCI-109 $p = q = 1$, PTC $p = q = 0$. $^{*}$ indicates PiNet (both models) achieved statistically significant gain.}
\label{tbl:benchresults}
\end{table*}

As with the isomorphism test, we compare against the GCN with a dense layer applied directly to the node outputs as well as with a sum of the node representations, the Weisfeiler Lehman graph kernel, and PATCHY-SAN. For each model we perform 10-fold cross validation on each dataset.

For PiNet, we use the same hyper-parameters as described in Section \ref{exp:message}. We test the model with $p$ and $q$ as trainable values, and also search over the space $(p, q) \in \{0, 1\} \times \{0, 1\}$. For the GCN we use two layers of sizes $100$ and $64$ hidden units, with a learning rate of $10^{-3}$, and for PATCHY-SAN we search over two labelling procedures, betweenness centrality \cite{Brandes2001} and NAUTY \cite{Mckay2013} canonical labelling.

\section{Results}

\subsection{Isomorphism Test}

\begin{figure}[ht]
  \centering
  \includegraphics[width=\linewidth]{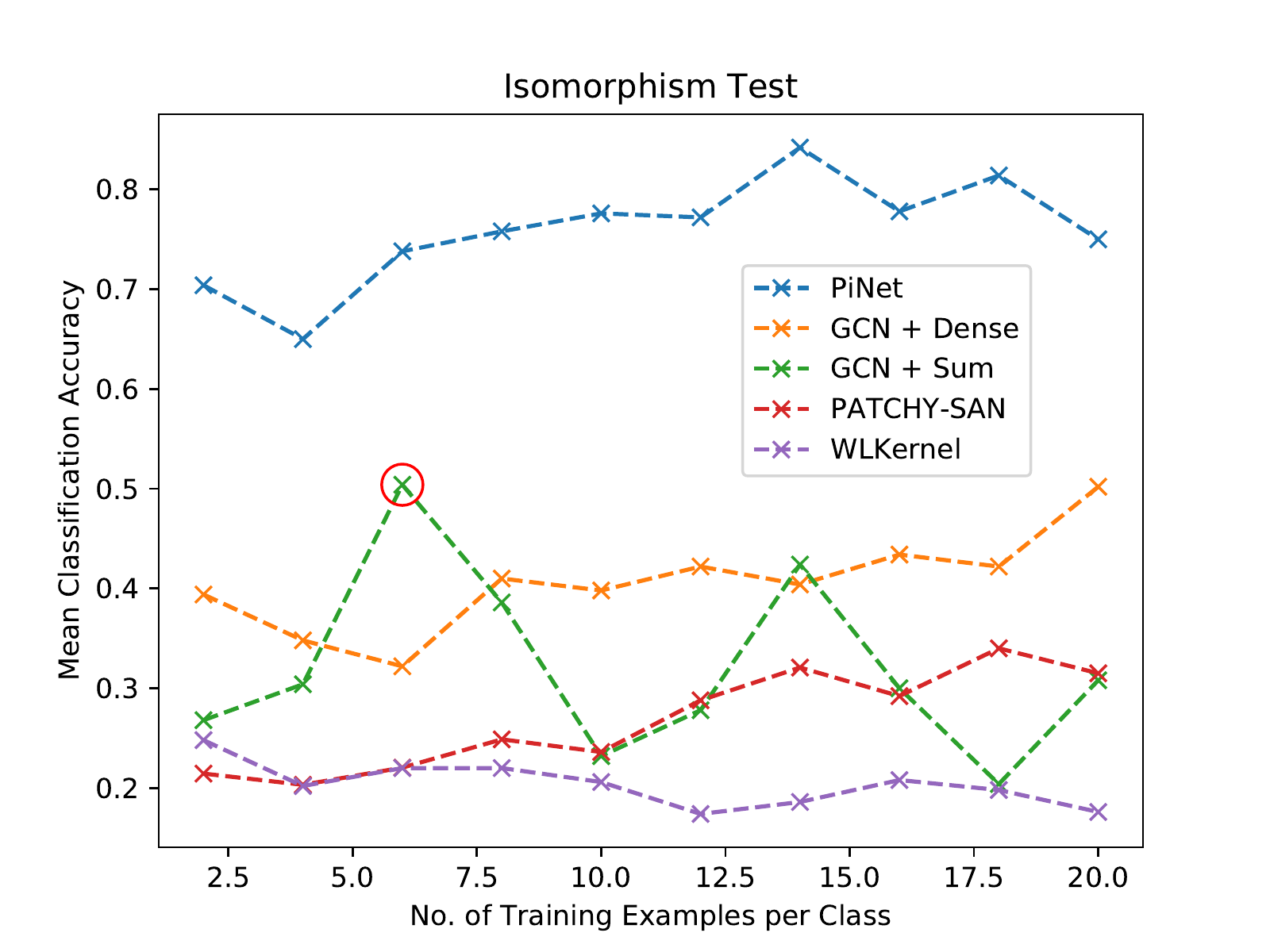}
  \caption{Mean classification accuracy on isomorphic graph classes.}
  \label{fig:isomorphic}
\end{figure}

As seen in \autoref{fig:isomorphic}, PiNet outperforms all competitors tested. Using an independent two-sample $t$-test we observe statistical significance ($p$-value $< 0.05$) in all cases except a single point (circled in red). PiNet fails to achieve 100\% accuracy since the neural network learns a surjective function, thus with so few training examples in some cases complete multiple classes become indistinguishable. 

\subsection{Message Passing Mechanisms}

\begin{figure}[ht]
  \centering
  \includegraphics[width=\linewidth]{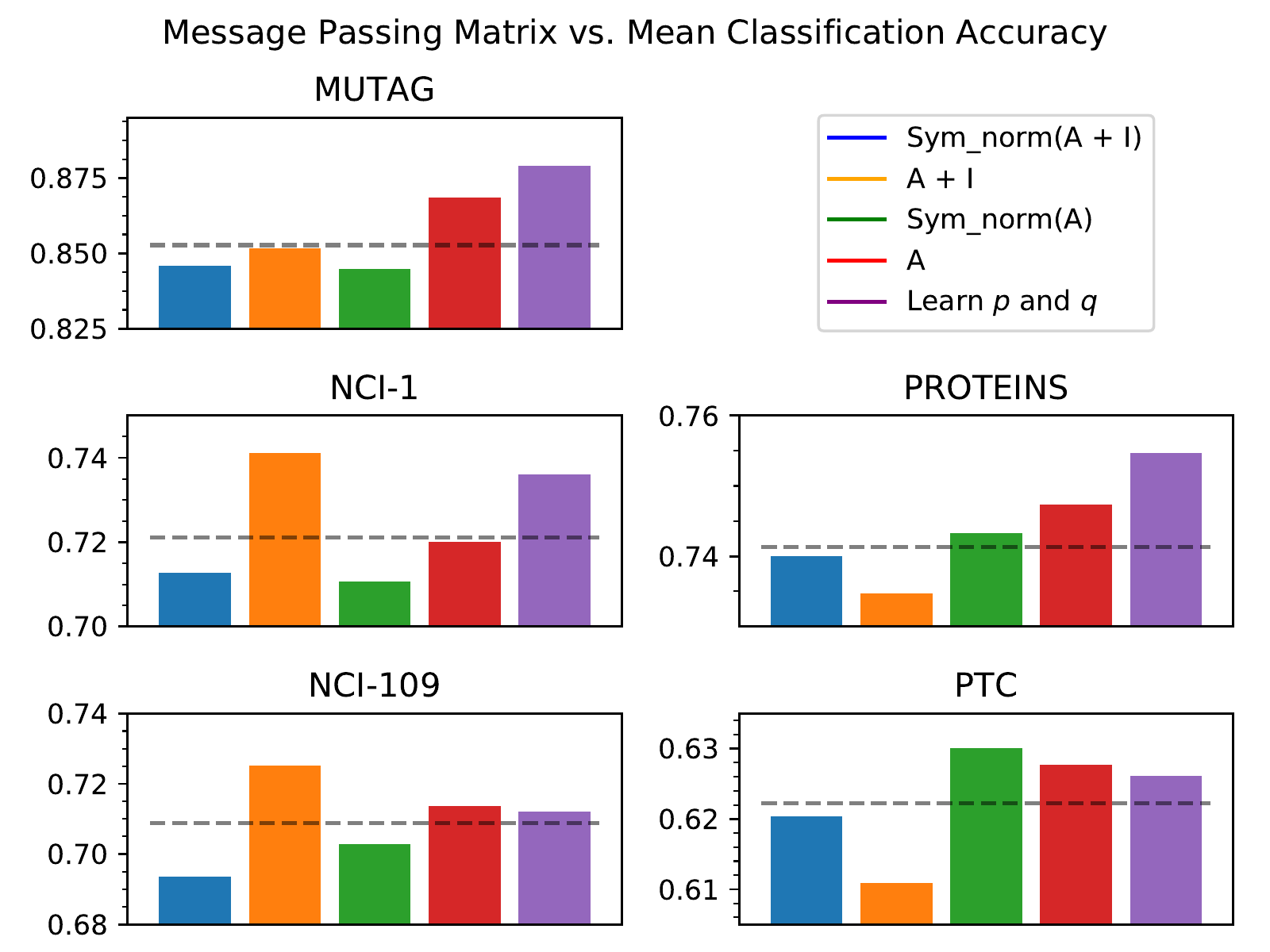}
  \caption{Each plot shows the mean classification accuracy for each message passing matrix of our search space, alongside the accuracy of PiNet when $p$ and $q$ are learned during training. The dashed lines indicate the mean accuracy of the manual search.}
  \label{fig:all_matrices}
\end{figure}

In \autoref{fig:all_matrices} we observe that the optimal message passing matrix (when $p$ and $q$ are fixed for all layers, and $(p, q) \in \{0, 1\} \times \{0, 1\}$) varies depending on the particular set of graphs given. With $p$ and $q$ as trainable parameters ($(p, q) \in [0, 1] \times [0, 1]$, and $(p, q)$ may be different for each layer), we see that for the MUTAG and PROTEINS datasets the model learns values that outperform those found with our manual search. For the others, however, the model is unable to find the optimal $p$s and $q$s, suggesting that the model finds only local minima. We note however, that in every case tested, the model is able to learn the values $p$ and $q$ that give better than average classification performance when compared with our manual search space.

\subsection{Comparison Against Existing Methods}

As shown in \autoref{tbl:benchresults}, PiNet achieves competitive classification performance on all datasets tested. An independent two-sample $t$-test indicates PiNet achieves a statistically significant ($p$-value $< 0.05$) gain in only a few cases ($^*$); however, with competitive performance on all datasets it demonstrates a robustness to the properties of the different sets of graphs.

\section{Conclusion}

We have proposed PiNet, an end-to-end deep neural network graph classifier invariant to permutations of nodes in the input graphs. We have provided theoretical proof of its invariance to permutation, and demonstrated the utility in such a property empirically with a graph isomorphism classification task against a set of existing graph classifiers, achieving a statistically significant gain in classification accuracy on a range of small training set sizes. The permutation invariance is achieved through a differentiable attention mechanism in which the model learns the weight by which the states associated with each node should be aggregated into the final graph representation.

We have demonstrated that PiNet is able to learn an effective parametrisation of a message passing matrix that enables it to adapt to different types of graphs with a flexible state propagation and diffusion mechanism. Finally, we have shown PiNet's robustness to the properties of different sets of graphs in achieving consistently competitive classification performance against a set of existing techniques on five commonly used molecule datasets.

For future work we plan to explore more advanced aggregation mechanisms by which the latent representations learned for each node of the graph may be combined.

\section*{Acknowledgements}
We thank Braintree Ltd. for providing the full funding for this work.

\appendix
\cleardoublepage
\bibliographystyle{named}
\bibliography{IJCAI-Graph-Classifier-2019.bib}
\end{document}